\renewcommand{\thesubfigure}{\thefigure.\arabic{subfigure}}
\renewcommand{\p@subfigure}{}
\renewcommand{\@thesubfigure}{\thesubfigure:\hskip\subfiglabelskip}
\newtheorem{theorem}{Theorem}
\newtheorem{definition}{Definition}
\newtheorem{lemma}{Lemma}
\newcommand{\abs}[1]{\lvert#1\rvert}
\begin{document}

\title[Vorono\"{i} Region based Image Segmentation]{Vorono\"{i} Region-Based Adaptive Unsupervised Color Image Segmentation}

\author[R. Hettiarachchi]{R. Hettiarachchi$^{\alpha}$}
\email{James.Peters3@umanitoba.ca}
\address{\llap{$^{\alpha}$\,}Computational Intelligence Laboratory,
University of Manitoba, WPG, MB, R3T 5V6, Canada
}
\author[J.F. Peters]{J.F. Peters$^{\alpha, \beta}$}
\address{\llap{$^{\beta}$\,} Department of Mathematics, Faculty of Arts and Sciences, Ad\.{i}yaman University, 02040 Ad\.{i}yaman, Turkey}


\date{}

\begin{abstract}
Color image segmentation is a crucial step in many computer vision and pattern recognition applications. This article introduces an adaptive and unsupervised clustering approach based on Vorono\"{i} regions, which can be applied to solve the color image segmentation problem. The proposed method performs region splitting and merging within Vorono\"{i} regions of the Dirichlet Tessellated image (also called a Vorono\"{i} diagram) , which improves the efficiency and the accuracy of the number of clusters and cluster centroids estimation process. Furthermore, the proposed method uses cluster centroid proximity to merge proximal clusters in order to find the final number of clusters and cluster centroids. In contrast to the existing adaptive unsupervised cluster-based image segmentation algorithms, the proposed method uses K-means clustering algorithm in place of the Fuzzy C-means algorithm to find the final segmented image. The proposed method was evaluated on three different unsupervised image segmentation evaluation benchmarks and its results were compared with two other adaptive unsupervised cluster-based image segmentation algorithms. The experimental results reported in this article confirm that the proposed method outperforms the existing algorithms in terms of the quality of image segmentation results. Also, the proposed method results in the lowest average execution time per image compared to the existing methods reported in this article.
\end{abstract}

\keywords{Vorono\"{i} Regions, Adaptive Unsupervised Clustering, Image Segmentation}

\maketitle
\section{Introduction} \label{sec:intro}

Image segmentation plays a major role in many computer vision and pattern recognition applications. According to \cite{cheng2001color}, image segmentation is the process of dividing an image into different regions such that each region is, but the union of any two adjacent regions is not, homogenous. The existing image segmentation techniques can be broadly categorized into threshold-based, clustering-based, region-based, edge-based and physics-based segmentation approaches \cite{ladys1994colour,cheng2001color}. There are various hybrid image segmentation techniques, which combine two or more of the aforementioned approaches. Clustering techniques have been widely used to cluster image pixels based on the similarity of their features (e.g. color, texture, etc.). K-means \cite{hartigan1979algorithm} and Fuzzy C-means (FCM) \cite{cannon1986efficient} are two of the most popular clustering techniques used for image segmentation.

Clustering is an unsupervised learning process, which does not require class labeled data set as training data to cluster unknown set of data into clusters. According to \cite{anil1998clustering}, a cluster is comprised of a number of similar objects (pixels in our case) collected or grouped together. In traditional clustering techniques such as K-means and FCM, the number of clusters has to be predefined to initiate the algorithm. Also, the initial cluster centers (or centroids) are not known. For example, the K-means algorithm starts with random cluster centers initially as starting points and then iteratively adjusts the centroids until the algorithm converges. This process is time consuming and may not produce the intended result. 

In contrast to K-means, FCM returns the membership weights for each point in the data set, which define the degree of belonging of a data point to a given cluster. A given data point will have a high membership weight (higher degree of belonging) to a nearby cluster a low membership weight for a faraway cluster. \cite{ghosh2013comparative} claims that K-means is faster and better compared to FCM algorithm.

In the case of color images, which are complex data sets by nature, the determination of the number of pixel clusters and the cluster centroids becomes very challenging. Thus, adaptive unsupervised clustering techniques, which automatically find the number of clusters and the corresponding cluster centroids is vital for successful image segmentation via clustering techniques. Due to this reason, during the recent past, the focus of many researchers has turned towards adaptive unsupervised clustering techniques for image segmentation. Mean-Shift algorithm introduced by Fukunaga and Hostetler \cite{fukunaga1975estimation} is a non-parametric clustering algorithm, which does not require the number of clusters to be predefined. However, Mean-Shift is very much slower compared to K-means algorithm. The time complexity of classical K-means is $O(knT)$ while the time complexity of Mean-Shift is $O(Tn^{2})$.

\begin{figure}[!htb]
	\centering
		\includegraphics[width=0.9\textwidth]{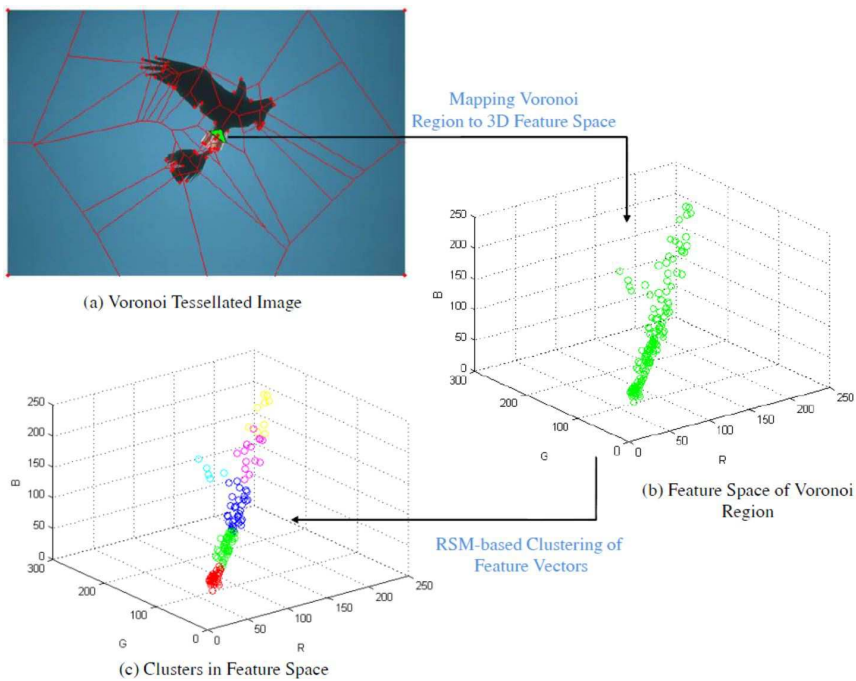}
	\caption{Intra-Vorono\"{i} region clustering}
	\label{fig:fig1}
\end{figure}

Recently, Yu et al. in \cite{yu2010adaptive} proposed an adaptive unsupervised algorithm called Ant Colony–Fuzzy C-means Hybrid Algorithm (AFHA), which is a combination of Ant System and Fuzzy C-mean techniques. In \cite{yu2010adaptive}, Ant System (AS) \cite{colorni1991distributed} is used to determine the number of clusters and the cluster centroids. It is said in \cite{yu2010adaptive} that AFHA outperforms other state-of-the-art segmentation technique, such as X-means \cite{pelleg2000x}, Mean-Shift, Normalized Cut \cite{shi2000normalized}. Another unsupervised adaptive clustering approach for image segmentation named Region Splitting and Merging-Fuzzy C-means Hybrid Algorithm (RFHA) is proposed in \cite{tan2013color} by Tan et al. RFHA algorithm uses region splitting and merging scheme to determine the number of clusters and cluster centroids. Both of these algorithms use Fuzzy C-means algorithm to find the final segmented image. Some other adaptive clustering approaches for image segmentation can be found in \cite{vantaram2011adaptive,sujaritha2010color,rosenberger2000unsupervised,ilea2008ctex,ugarriza2009automatic,bhoyar2010color}.

The computational complexity of the AFHA algorithm is very high due to the complicated nature of the AS module that it uses. Thus Yu et al. proposed a modified version of AFHA algorithm named improved AFHA (IAFHA)in \cite{yu2010adaptive}, which finds the number of clusters and cluster centroids via AS by taking only a small proportion (about 30\%) of the total number of pixels into account. This modification in IAFHA significantly reduces the computational complexity of the original AFHA algorithm, but it affects the performance of the algorithm at the same time. RFHA is faster compared to AFHA. However, both of these algorithms suffer from the high computational complexity of the FCM algorithm. 

In this article, we propose a Vorono\"{i} region-based adaptive unsupervised algorithm to automatically find the number of clusters and the cluster centroids in a given set of pixels. First, the proposed algorithm adaptively divides the image into Vorono\"{i} regions and then automatically finds the number of clusters and cluster centroids of the pixels belonging to each of these Vorono\"{i} regions by using a region splitting and merging scheme similar to the one given in \cite[\S 2.1]{tan2013color} (see figure \ref{fig:fig1}). Next, the intra-Vorono\"{i} region clusters that are near each other will be merged together to find the final number of clusters and cluster centroids in a given image. Finally, the final segmented image will be found by applying K-means clustering algorithm on the whole image initiated with the number of clusters and cluster centroids found in the previous step.


The Vorono\"{i} region wise clustering in the proposed algorithm reduces the complexity of the segmentation problem significantly, which will be discussed in detail in section \ref{sec:compcomplex}. Furthermore, since the number of possible clusters within a single Vorono\"{i} region is usually lower compared to the number of clusters in the whole image, estimating the number of clusters and cluster centroids becomes more efficient and precise.   

The rest of the article is organized as follows. Section \ref{sec:relwork} presents the work related to the method proposed in this article. The methodology of the propsoed Vorono\"{i} region-based adaptive unsupervised algorithm is presented in section \ref{sec:method}. Section \ref{sec:exp} discusses the results of experiments conducted on the proposed method and two other adaptive unsupervised cluster-based image segmentation algorithms. Finally, section \ref{sec:conc} concludes the research work presented in this article. 

\section{Related Work} \label{sec:relwork}

\subsection{Image Segmentation}\label{sec:imgseg}

A formal definition of image segmentation given in \cite{pal1993review} is as follows.

\begin{definition}\label{def:def1}\textbf{Image Segmentation}\cite{pal1993review}\\
If $F$ is the set of all pixels and $P()$ is a uniformity (homogeneity) predicate defined on group of connected pixels, then segmentation is a partitioning of the set $F$ into a set of connected subsets of regions $(S_1,S_2,\ldots,S_n)$ such that
\[
\bigcup_{i=1}^n S_i=F \mbox{ with } S_i \cap S_j =\emptyset, i\neq j
\]
\end{definition}

The uniformity predicate $P(S_i)=true$ for all regions $(S_i)$ and $P(S_i \cap S_j)=false$, when $S_i$ is adjacent to $S_j$. Based on this definition, a segmented image can be evaluated by measuring the homogeneity within each segment and by measuring the overlap between each pair of segments. Haralick et al. proposed four criteria to define a good image segmentation in \cite{haralick1985image} as follows.

\begin{enumerate}
	\item Regions should be uniform and homogeneous with respect to some characteristic(s).
	\item Adjacent regions should have significant differences with respect to the characteristic on which they are uniform.
	\item Region interiors should be simple and without holes.
	\item Boundaries should be simple, not ragged, and be spatially accurate.
\end{enumerate}

The first two criteria can be directly derived from the definition \ref{def:def1}. Most of the segmentation evaluation methods are based on the first two criteria defined by Haralick et al. \cite{haralick1985image} given in section \ref{sec:imgseg}, jointly called the characteristic criteria. The first criterion measures the intra-region uniformity while the second criterion measures the inter-region disparity. Zhang et al. \cite{zhang2008image} provides a good summary of unsupervised evaluation methods, which fall under both of these categories. 

\subsection{Dirichlet Tessellation (Vorono\"{i} Diagram)} \label{ss:DT}
Image tessellation is a tiling of an image surface with regular polygons and Dirichlet tessellation (also called Vorono\"{i} diagram) is one example of image tessellation. Dirichlet~\cite{Dirichlet1850} introduced polygon-based tessellation in 1850, which was elaborated by Vorono\"{i} in 1907~\cite{Voronoi1907}. A Vorono\"{i} diagram is the partitioning of a plane with $n$ points into convex polygons such that each polygon contains exactly one generating point and every point in a given polygon is closer to its generating point than to any other. Thus, a Vorono\"{i} region can be defined by the following definition,

\begin{definition}\label{def:def2}\textbf{Vorono\"{i} Region}\\
Let $S$ and $X$ be finite sets in a $n$-dimensional Euclidean space. Let $p \in S$ (denoted $V_p$) is defined by
\[
V_p= \left\{ x \in X: \left\| x-p \right\| \leq_{\forall q \in S} \left\| x-q \right\| \right\}.
\]
where $S$ is the set of generating points.
\end{definition}

Thus, in order to generate the Vorono\"{i} diagram, the generating points or the seed points have to be provided. In our context, these seed points can be \emph{corners, centroids, critical points, key points} found in images. For example, Du et al. introduced the technique of Centroidal Vorono\"{i} Tessellations (CVT) in 1999 \cite{du1999centroidal}, which uses centroids as the generating points for the Vorono\"{i} tessellation. In 2006, Du et al. revisits the CVT algorithm in their subsequent article \cite{du2006centroidal}, which focus more on the applications of CVT.

Vorono\"{i} regions (convex polygons produced by Dirichlet tessellation) have been explored as a solution to the image segmentation problem during the past two decades. An interesting paper on Vorono\"{i} based image segmentation is \emph{On Points Geometry for Fast Digital Image Segmentation} \cite{cheddad2008points}. In this paper, rather than applying the Vorono\"{i} Diagram on the image itself, it is applied on a few selected points by using the image histogram. \cite{suhail2002content} also suggests Vorono\"{i} cells for image segmentation. The articles \cite{du1999centroidal,du2006centroidal} focus on spatial Dirichlet tessellation while \cite{cheddad2008points} focus on Dirichlet tessellation on image histograms. 

\subsection{Region Splitting and Merging Scheme} 
A summary of the region splitting and merging scheme proposed in \cite[\S 2.1]{tan2013color} is given below.

\begin{enumerate}
	\item Apply the moving average filter with a span of 5 to the histogram of each color channel.
	\item Identify and remove all the local peaks and valleys in the histogram of each color channel based on the fuzzy rule base given in equation (2) and (3) in \cite[\S 2.1]{tan2013color}.
	\item Identify the significant peaks by examining the turning points from positive to negative gradient changes.
	\item Determine the valleys of each color channel histogram by taking the minimum value between any adjacent peaks.
	\item Define color cells (classes) based on the combinations of valleys in the histogram of each color channel. 
	\item Assign each pixel to a color cell depending on its color channel values.
	\item Calculate initial cluster centers by averaging pixels withing a given cell.
	\item Calculate Manhattan distances between all pairs of cluster centers and find the two nearest clusters.
	\item Merge the two nearest clusters and update the cluster center.
	\item Reduce the number of clusters and repeat the process until the shortest distance between two nearest clusters is not less than a predefined threshold.
\end{enumerate}

\section{Proposed Method} \label{sec:method}
As explained in section \ref{ss:DT}, Dirichlet tessellation divides an image into polygonal regions based on the spatial locations (X and Y coordinates) of the seed points (generating points). These polygonal Vorono\"{i} regions are much simpler to analyze and process compared to the whole image. Also, since the seed points can be derived from a given image, a tessellation is tailored to the structure of the image itself. Since, these image features are found spatially in an image, we call this process \emph{spatial Dirichlet tessellation}, which divides an image into $N$ number of Vorono\"{i} regions, where $N$ is the number of seed points.

\[
	V=\left\{ V_1, V_2,\ldots,V_N \right\}.
\]

Once a Dirichlet tessellation of a given image is found, the next step is to further divide each Vorono\"{i} region $V_p$ until the whole image is grouped into small similar regions. Thus, next we consider the feature space of each Vorono\"{i} region. If each pixel within a given Vorono\"{i} region $V_p$ is represented by its d-dimensional feature vectors, then each pixel in $V_p$ can be mapped to a point on a d-dimensional feature space. 

\begin{figure}[!htb]
	\centering
		\includegraphics[width=1.00\textwidth]{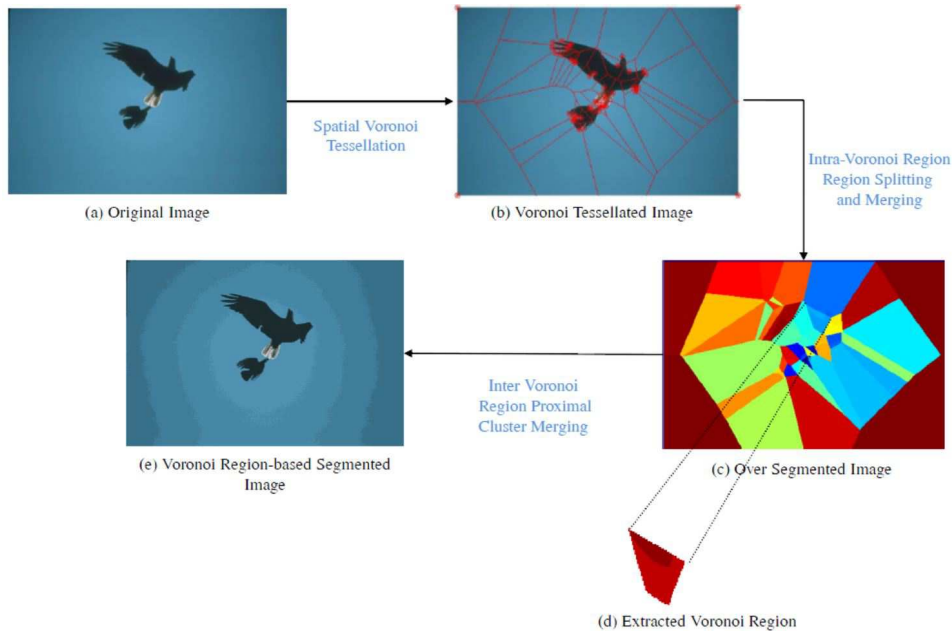}
	\caption{Stages of the proposed method}
	\label{fig:fig2}
\end{figure}

\begin{equation}\label{eq:3}
V_p \rightarrow X_p, \text{ where } X_p \subset \mathbb{R}^d \text{ and } \Phi: \mathbb{R}^2 \rightarrow \mathbb{R}^d \text{ defined by } \Phi(x)=\left\{ \Phi_1(x), \Phi_2(x), \ldots \Phi_d(x) \right\}=\overline{x}.
\end{equation}
where $\Phi$ is a set of probe functions representing features of a given pixel $x$. $\Phi(x) = \overline{x}$ is the feature vector representing the pixel $x$.

Then, Dirichlet tessellation can be performed on the feature space $\mathbb{R}^d$, where $d=3$ in this case as we use R, G and B color channel values of each pixel. For this purpose, the Lloyd style K-means clustering algorithm \cite{ostrovsky2006effectiveness} can be used. K-means clustering partitions the space of observations into $k$ classes such that each observation belongs to the cluster with the nearest mean or the centroid. In an optimal solution, each centroid is assigned the data in its Vorono\"{i} region and is located at the center of mass of this data. The Lloyd style K-means clustering algorithm runs iteratively until all the data points in the Vorono\"{i} region of a given centroid are clustered together, and then the centroid is moved to the centroid of its cluster \cite{ostrovsky2006effectiveness}. The original Lloyd's algorithm \cite{lloyd1982least}, which is also known as Vorono\"{i} iteration differs from k-means clustering in that its input is a continuous geometric region rather than a discrete set of points as in K-means clustering.

Once the k-means clustering is performed, the discrete set of points in the feature space fall within the Vorono\"{i} regions of the centroids closest to those regions. Therefore, K-means clustering can be used to cluster data into Vorono\"{i} regions based on their features. The K-means clustering algorithm requires the number of clusters ($k$) to be predefined. In the case of the whole image, automatically finding the number of clusters is computationally complex. However, in the case of small Vorono\"{i} region, finding the number of clusters will be more effective due to the small number of pixels and small number of clusters within a given Vorono\"{i} region. Thus, we propose to use the region splitting and merging (RSM) scheme given in \cite{tan2013color} to adaptively determine the number of clusters and cluster centroids for each Vorono\"{i} region.

First, the set of pixels within $V_p$ will be mapped into its feature space $X_p$ with the help of equation (\ref{eq:3}). Then, region splitting and merging will be applied on $X_p$ in order to find the adaptive feature space Dirichlet tessellation. Let $k_p$ be the number of clusters and $C_p$ be the cluster centroids found by applying RSM on the set of pixel feature points $X_p$ within a given Vorono\"{i} region $V_p$, then $X_p$ can be represented as a set of clusters as given below. Figure \ref{fig:fig1} depicts the process of intra-Vorono\"{i} region clustering.


\[
X_p= \{s_1,s_2,\ldots,s_{k_p}\}. 
\]

Once, the feature space Dirichlet tessellation is completed for all the Vorono\"{i} regions in the image, the image can said to be at an over-segmented state. Let the total set of clusters after the feature space Dirichlet tessellation be $S = \{ s_1,s_2,\ldots,s_P\}$. Since, the clustering was performed within each Vorono\"{i} region, there can be clusters with similar features belonging to two different Vorono\"{i} regions. Thus, in the next step we find such similar clusters and merge them together to find the final number of clusters and cluster centroids.  

In the proposed method, the concept of nearness (proximity) (\cite[\S 1.19]{peters2014topology}, \cite[\S 1.4]{peters2016proximity} and~\cite{PetersInan2016arXiv}) will be used to find clusters, which are very much similar (proximal) to each other. These proximal clusters can be merged together to form a single cluster. We extend the notion of nearness of clusters to nearness of their respective centroids to compare two clusters.

\begin{definition}\label{def:def3} \textbf{Cluster Centroid}\\
Let $s$ be cluster, then the centroid of the cluster $\mu$ can be defined as,
\[
\overline{\mu}=\frac{\sum_{\forall \overline{x_i} \in s}{\overline{x_i}}}{n}
\]

where $\overline{x_i}$ are the feature vectors representing the members of $s$ and $n$ is the number of members in $s$.
\end{definition}


The \emph{Centroid} method can be used, when finding inter-Vorono\"{i} region proximal clusters. In the centroid method, \emph{the resemblance between two clusters is equal to the resemblance between their centroids} \cite[\S 9.5]{romesburg2004cluster}. Thus, clusters whose centroids are closest together are merged. The centroid of a merged cluster is a weighted combination of the centroids of the two individual clusters, where the weights are proportional to the sizes of the clusters \cite[pp. 373]{noruvsis2012ibm}. 

Let $s_1, s_2$ be two clusters and $\overline{\mu_1},\overline{\mu_2}$ be there cluster centroids respectively. Then the lemma \ref{lem:lem1} can be derived for the Centroid method,

\begin{lemma}\label{lem:lem1}
Let $s_i, s_j$ be two clusters in $S$ and $\overline{\mu_i},\overline{\mu_j}$ be their cluster centroids respectively. Then,
\[
proximity(s_i,s_j)=proximity(\overline{\mu_i},\overline{\mu_j}).
\]
\end{lemma}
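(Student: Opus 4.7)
The plan is to argue that Lemma~\ref{lem:lem1} is essentially a direct formalization of the Centroid method as described by Romesburg. First I would make explicit what \emph{proximity} means on the two sides of the claimed equality: on the left, it is a nearness relation between two subsets $s_i, s_j \subset X_p$ (viewed as clusters of feature vectors) in the sense of \cite[\S 1.19]{peters2014topology} and \cite[\S 1.4]{peters2016proximity}; on the right, it is the corresponding nearness applied to the two points $\overline{\mu_i}, \overline{\mu_j} \in \mathbb{R}^d$. Since centroids are themselves feature vectors (by Definition~\ref{def:def3}), both quantities live in the same underlying proximity space, so the two sides of the equation are at least comparable.

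Next I would invoke the Centroid method itself: by definition, in this method the resemblance (equivalently, the proximity) between two clusters is \emph{defined} to equal the resemblance between the singletons $\{\overline{\mu_i}\}$ and $\{\overline{\mu_j}\}$ formed by their centroids. Thus for any two clusters $s_i, s_j \in S$, the centroid-method proximity $\mathrm{proximity}(s_i, s_j)$ is, by construction, $\mathrm{proximity}(\overline{\mu_i}, \overline{\mu_j})$. In slightly more formal language, the map $s \mapsto \overline{\mu}$ (Definition~\ref{def:def3}) induces a representative of each cluster in $\mathbb{R}^d$, and the Centroid method pulls back the Euclidean (or more generally the $\Phi$-based) nearness from points to clusters via this map.

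Finally, I would tie the equality back to the application in the paper: two clusters $s_i, s_j$ are declared proximal (and hence eligible for merging in the inter-Vorono\"i region merging stage) exactly when their centroids are close in feature space, which is precisely the test the algorithm performs using the weighted centroid update described after Definition~\ref{def:def3}. This closes the loop between the lemma and the subsequent merge rule.

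I do not expect any real technical obstacle here: the statement is a definitional consequence of the Centroid method rather than a theorem requiring estimates. The only point that needs care is to be explicit that \emph{proximity} on the right-hand side is being evaluated on the singletons $\{\overline{\mu_i}\}, \{\overline{\mu_j}\}$, so that both sides of the equation are instances of the same underlying nearness relation $\delta$ (or $\delta_\Phi$); once this is spelled out, the equality is immediate.
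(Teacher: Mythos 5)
Your plan treats the lemma as true by definition: you take the Centroid method to \emph{define} the proximity of two clusters as the proximity of their centroids, and then the equality is immediate. That is not what the paper does, and it leaves a genuine gap. In the paper's proof the two sides of the equality are different objects: $proximity(\overline{\mu_i},\overline{\mu_j})$ is the resemblance of the two centroid vectors, taken to be the inner product $\overline{\mu_i}\cdot\overline{\mu_j}$, while $proximity(s_i,s_j)$ is an aggregate of member-to-member resemblances, namely the average $\frac{1}{n_i n_j}\sum_{\overline{x_i}\in s_i}\sum_{\overline{x_j}\in s_j}\overline{x_i}\cdot\overline{x_j}$ of all pairwise dot products across the two clusters. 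The content of the lemma is precisely that these two quantities coincide, and the paper proves it by expanding each centroid via Definition~\ref{def:def3} and using bilinearity of the dot product. Your argument never establishes this identity; if cluster proximity were simply defined to equal centroid proximity, the lemma would be vacuous and the subsequent use of it (Theorem~\ref{th:th1}, where closeness of centroids is used to conclude that the \emph{clusters} are proximal and may be merged) would rest on a circular stipulation rather than on a fact about the clusters' members.

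To close the gap you would need to commit to a member-level notion of cluster resemblance (the group-average inner product, as the paper implicitly does) and then carry out the short computation: write $\overline{\mu_i}=\frac{1}{n_i}\sum_{\overline{x_i}\in s_i}\overline{x_i}$ and $\overline{\mu_j}=\frac{1}{n_j}\sum_{\overline{x_j}\in s_j}\overline{x_j}$, take the dot product, and distribute to obtain the double sum. Your observation that centroids are themselves points of the same feature space, so that both sides live in one proximity structure, is fine as framing, but it is not a substitute for this calculation; appealing to Romesburg's description of the Centroid method only restates the conclusion, it does not derive it.
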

\begin{proof}
\[
\begin{array}{l l}
proximity(\overline{\mu_i},\overline{\mu_j}) & =\overline{\mu_i} \cdot \overline{\mu_j} \\
 & =\left( \frac{1}{n_i} \sum_{\overline{x_i} \in s_i} \overline{x_i} \right) \cdot \left( \frac{1}{n_j} \sum_{\overline{x_j} \in s_j} \overline{x_j} \right)\\
 & =\frac{1}{n_i n_j} \sum_{\overline{x_i} \in s_i} \sum_{\overline{x_j} \in s_j} \overline{x_i} \cdot x_j\\
 & = proximity(s_i,s_j) 
\end{array}
\]
\end{proof}

In the context of this article, the centroid proximity will be defined as,

\begin{definition} \label{def:def4} \textbf{Centroid Proximity} \\
Let $s_i, s_j$ be two clusters in $S$ and $\overline{\mu_i},\overline{\mu_j}$ be their cluster centroids respectively. The centroid proximity between $\overline{\mu_i}$ and $\overline{\mu_j}$ will be defined as,
\[
proximity(\overline{\mu_i},\overline{\mu_j})=d(\overline{\mu_i},\overline{\mu_j})
\]
where, $d(\overline{\mu_i},\overline{\mu_j})$ is the Manhattan distance (proximity measure) between the two centroids $\overline{\mu_i}$ and $\overline{\mu_j}$ given by, 
\[
d(\overline{\mu_i},\overline{\mu_j}) = \mathop{\sum}\limits_{i=1}^n \abs{\Phi_i(\mu_i)-\Phi_i(\mu_j)}.
\]
\end{definition}

\begin{theorem}\label{th:th1}
Let $s_i, s_j$ be two clusters in $S$ and $\mu_i,\mu_j$ be their cluster centroids respectively.
\[
If \ d(\overline{\mu_i},\overline{\mu_j}) < \varepsilon, \text{ then } s_i \ \delta_d \ s_j 
\]
where $d$ is the Manhattan distance metric leading to metric proximity $\delta_d$ and , $\varepsilon$ is a predefined threshold. 
\end{theorem}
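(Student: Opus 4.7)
The plan is to chain Lemma 1 and Definition 4 together so that the hypothesis on the Manhattan distance between centroids is converted, one step at a time, into the proximity relation claimed in the conclusion. Concretely: by Lemma 1 (interpreted in the Manhattan-metric setting adopted in Definition 4), the proximity between $s_i$ and $s_j$ agrees with the proximity between their centroids $\overline{\mu_i}$ and $\overline{\mu_j}$; by Definition 4, this centroid proximity equals $d(\overline{\mu_i}, \overline{\mu_j})$; by the hypothesis, this quantity is strictly less than $\varepsilon$; and by the standard convention for threshold-based metric proximity, this is precisely the relation $s_i \, \delta_d \, s_j$.

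Before writing the chain of identifications I would pin down $\delta_d$ explicitly, since the excerpt does not formally define it in the theorem statement. Following the cited proximity-theory references, I would adopt $A \, \delta_d \, B$ iff the inter-set Manhattan-distance proximity measure $D_d(A,B)$ is below the preselected threshold $\varepsilon$. Under the centroid method espoused in Section 3 (and by Lemma 1 restated in metric form), the natural inter-set measure between two clusters is $d(\overline{\mu_i}, \overline{\mu_j})$, so $s_i \, \delta_d \, s_j$ reduces by definition to $d(\overline{\mu_i}, \overline{\mu_j}) < \varepsilon$, which is exactly the hypothesis. The proof is then a one-line substitution.

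The main obstacle I anticipate is the apparent clash between the two notions of proximity used earlier: the inner-product-style proximity that appears in the given proof of Lemma 1 (where larger values mean closer) and the Manhattan-distance-style proximity of Definition 4 and of the present theorem (where smaller values mean closer). To make the proof genuinely rigorous rather than merely formal, I would reprove the \emph{content} of Lemma 1 in the metric setting used here, observing that the centroid is still the arithmetic mean and is therefore a faithful representative of the cluster: for any two clusters $s_i,s_j$ with centroids $\overline{\mu_i},\overline{\mu_j}$, one has $d(\overline{\mu_i},\overline{\mu_j}) \le \tfrac{1}{n_i n_j}\sum_{\overline{x_i}\in s_i}\sum_{\overline{x_j}\in s_j} d(\overline{x_i},\overline{x_j})$ by the triangle inequality applied coordinatewise, so bounding the centroid distance below $\varepsilon$ still certifies that the two clusters are metrically close. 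Once that metric version of Lemma 1 is in hand, Theorem 1 follows immediately by rewriting the hypothesis as the definition of $\delta_d$.
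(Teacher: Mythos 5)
Your main line is essentially the paper's own proof: the paper likewise just chains Lemma \ref{lem:lem1} with Definition \ref{def:def4} to write $proximity(s_i,s_j)=proximity(\overline{\mu_i},\overline{\mu_j})=d(\overline{\mu_i},\overline{\mu_j})$ and then reads off $s_i\ \delta_d\ s_j$ from the threshold hypothesis, so your one-line substitution is the same argument. You are right to flag the clash the paper glosses over: Lemma \ref{lem:lem1} is proved with an inner-product (similarity) notion of proximity, whereas Definition \ref{def:def4} and the theorem use Manhattan distance, and the paper's proof silently identifies the two; in effect the theorem is definitional, since Definition \ref{def:def5} (stated immediately after it) takes $s_i\ \delta_d\ s_j$ to mean exactly $d(\overline{\mu_i},\overline{\mu_j})<\varepsilon$. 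One caution about your proposed metric repair: the inequality $d(\overline{\mu_i},\overline{\mu_j})\le\frac{1}{n_i n_j}\sum_{\overline{x_i}\in s_i}\sum_{\overline{x_j}\in s_j}d(\overline{x_i},\overline{x_j})$ is correct, but it bounds the centroid distance \emph{above} by the average pairwise distance, so it certifies the converse direction (clusters that are close in the averaged sense have close centroids); a centroid distance below $\varepsilon$ does not by itself bound the average pairwise distance, so it cannot ``certify that the two clusters are metrically close'' in that averaged sense. Your proof still goes through, but only because $\delta_d$ between clusters is taken, per the centroid method and Definition \ref{def:def5}, to be the thresholded centroid distance itself rather than any genuinely inter-set measure.
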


\begin{proof}
From lemma \ref{lem:lem1} and definition \ref{def:def4}, $proximity(s_i,s_j)=proximity(\overline{\mu_i},\overline{\mu_j})=d(\overline{\mu_i},\overline{\mu_j})$. Thus, if $d(\overline{\mu_i},\overline{\mu_j}) < \varepsilon$, then $\overline{\mu_i}\ \delta_d \ \overline{\mu_j} \Rightarrow s_i \ \delta_d \ s_j$.
\end{proof}

\begin{definition} \label{def:def5} \textbf{Proximal Clusters} \\
Let $s_i, s_j$ be two clusters in $S$ and $\overline{\mu_i},\overline{\mu_j}$ be their cluster centroids respectively. The clusters $s_i$ and $s_j$ are called proximal clusters (denoted by $s_i \ \delta_d \ s_j$) if, $d(\overline{\mu_i},\overline{\mu_j}) < \varepsilon$, where $\varepsilon$ is a predefined threshold.
\end{definition}

Thus, two clusters $s_i,s_j$ can be merged if $d(\overline{\mu_i},\overline{\mu_j}) < \varepsilon$ and the centroid of the merged cluster can be calculated by using equation (\ref{eq:eq2}),

\begin{equation}\label{eq:eq2}
	\overline{\mu} = \frac{n_i}{n_i+n_j}\overline{\mu_i} + \frac{n_j}{n_i+n_j}\overline{\mu_j}.
\end{equation}

where $n_i$ and $n_j$ are the number of elements in $s_i$ and $s_j$ respectively. 

In the proposed method, the following merge condition derived based on theorem \ref{th:th1} will be used to iteratively merge proximal clusters belonging to different Vorono\"{i} regions.

\textbf{Merge Condition :}
Let $s_i, s_j$ be two clusters in $S$ such that $s_i \subset X_i, s_j \subset X_j, \text{ where } V_i \rightarrow X_i \text{ and } V_j \rightarrow X_j, V_i \neq Vj$. $s_i$ and $s_j$ will be merged if $s_i \ \delta_d \ s_j$.

All proximal clusters, which satisfy the above merge condition will be found and merged together iteratively until the number of proximal clusters becomes zero. This process yields the final number of clusters $k$ and the cluster centroids $C$. Finally, K-means clustering on the feature vectors of the total set of pixels $X=\bigcup_{p=1}^N X_p$ of the image will be performed initiated with $k$ and $C$ to find the final segmented image. 

\subsection{Implementation of the Proposed Algorithm}
In the implementation of the proposed algorithm, corner points of an image will be used as generating points of the spatial Vorono\"{i} tessellation. The proposed algorithm consists of the following major steps:

\begin{enumerate}
	\item Find the (X,Y) coordinates of the set of the most prominent corners in the image $I$.
	\item Generate the Vorono\"{i} diagram $V=\left\{ V_1, V_2,\ldots,V_N \right\}$ by taking the set of corner points found in step 1 as the generating points.
	\item Within each Voron\"{i} region $V_p$,
	\begin{enumerate}
		\item Apply RSM on the feature vector set $X_p$ of each Vorono\"{i} region $V_p$ to find the number of clusters $k_p$, cluster centroids $C_p$ and the set of clusters $X_p= \{s_1,s_2,\ldots,s_{k_p}\}$.
		\item Let the total set of clusters belonging to all Vorono\"{i} regions be $S = \{ s_1,s_2,\ldots,s_P\}$.
	\end{enumerate}
	\item Iteratively merge the inter Vorono\"{i} region proximal clusters until the number of proximal clusters become zero.
	\begin{enumerate}
		\item Find pair of clusters $s_i, s_j\subset S$, which satisfy the merge condition and $d(\overline{\mu_i},\overline{\mu_j})=inf_{\forall s_x,s_y \subset S}\left( d(\overline{\mu_x},\overline{\mu_y}) \right)$. where $\overline{\mu_i}$ and $\overline{\mu_j}$ are the centroids of clusters $s_i$ and $s_j$ respectively.
		\item Merge $s_i$ and $s_j$ together so that $s_k=s_i \cup s_j$.
		\item Find new cluster centroid $\mu_k$ of cluster $s_k$ using the equation (\ref{eq:eq2}).
		\item Update the total number of clusters $P=P-1$.
		\item Repeat steps 4.(a) to 4.(d) until $d(\overline{\mu_i},\overline{\mu_j}) < \varepsilon$, where $\varepsilon$ is the proximity threshold.
	\end{enumerate}	
\end{enumerate}

The pseudo code of the Vorono\"{i}-based segmentation algorithm is given in algorithms \ref{algo:voronoi} and the stages of the proposed method are depicted in figure \ref{fig:fig2}.

\begin{algorithm}[!htb]
\begin{algorithmic}[1] \label{algo:voronoi}
\renewcommand{\algorithmicrequire}{\textbf{Input:}}
\renewcommand{\algorithmicensure}{\textbf{Output:}}
\REQUIRE Image $I$, Proximity Threshold $\varepsilon$
\ENSURE  Vorono\"{i} Segmented Image $I_{seg}$  
\STATE \hspace{-0.9em}\textbf{function} Vorono\"{i}\_segmentation ($I,\varepsilon$)
	\STATE Find N corners of $I$, $\{c_1,c_2,\ldots,c_N\}$
	\\ \textit{\%Spatial Dirichlet tessellation}
	\STATE Find $V = \{V_1,V_2,\ldots,V_N \}$ : $d(p,c_i)<d(p,c_j) \forall$ pixels $p\in v_i, i \neq j$
 \\ \textit{\%Feature space Dirichlet tessellation}	
  \FOR {\textbf{each} $V_p \subset V$}		
		\STATE Find set of feature vectors $X_p=\{\overline{x}_1,\overline{x}_2,\ldots,\overline{x}_m\}$ \textit{\% $\overline{x}_i$ = feature vector of pixel $p_i \in V_p$}	 
		\STATE Find the number of clusters $k_p$, cluster centroids $C_p$ and set of clusters of $X_p$ using RSM module
		\STATE $X_p= \{s_1,s_2,\ldots,s_{k_p}\}$ 
 \ENDFOR
\\ \textit{\%Inter Vorono\"{i} region proximal cluster merging}	
	\STATE $S = \{ s_1,s_2,\ldots,s_P\} : S = \bigcup_{\forall X_p}s_p \subset X_p$
	\STATE $minProximity=inf_{\forall s_x,s_y \subset S}\left( d(\overline{\mu_x},\overline{\mu_y}) \right)$
	\WHILE{$minProximity<\varepsilon$}
		\STATE Find the clusters $s_i, s_j\subset S$ that satisfy the merge condition and $\overline{\mu_j})=inf_{\forall s_x,s_y \subset S}\left( d(\overline{\mu_x},\overline{\mu_y}) \right)$
		\STATE $s_k=s_i \cup s_j$
		\STATE Find new cluster centroid
		\STATE P=P-1
		\STATE $minProximity=inf_{\forall s_x,s_y \subset S}\left( d(\overline{\mu_x},\overline{\mu_y}) \right)$
	\ENDWHILE
\STATE $I_{seg} = \left\{ s_1',s_2',\ldots,s_M' \right\}$
\RETURN $I_{seg}$ 
\STATE \hspace{-0.9em} \textbf{end function}
\end{algorithmic} 
\caption{Vorono\"{i}-based image segmentation algorithm}
\end{algorithm}

\subsection{Analysis of the Computational Complexity} \label{sec:compcomplex}
The main components, which contribute to the computational complexity of the AFHA algorithm are AS and FCM algorithms. The computational complexity of AS is $O(ncdi)$ and the computational complexity of FCM is $O(nc^2di)$ \cite{ghosh2013comparative}, where $n$ is the number of data points, $c$ is the number of clusters, $d$ is the number of dimensions, $i$ is the number of iterations. Thus, the computational complexity of AFHA algorithm can be given as $O(ncdi) + O(nc^2di)$. Similarly in the case of RFHA algorithm, the computational complexities of region splitting and merging phases are $O(n^rn^gn^b)$ and $O(idc^2)$ respectively and the complexity of the FCM algorithm is $O(nc^2di)$, where $n^r, n^g \text{ and } n^b$  are the number of valleys in the histograms of R,G and B channels respectively. Thus, the computational complexity of RFHA algorithm is $O(n^rn^gn^b) + O(idc^2) + O(nc^2di)$.

In the proposed algorithm, the regions splitting and merging happens within each Vorono\"{i} region leading to $O(\sum_{k=1}^N n^r_k n^g_k n^b_k)$ and $O(i_kdc_k^2)$ complexities in region splitting and merging stages  respectively, where $N$ is the number of Vorono\"{i} regions. It is important to note that the parameters $n^r_k, n^g_k, n^b_k$ and $c_k$ of the proposed method are much less compared to $n^r, n^g, n^b$ and $c$ of the RFHA algorithm as the region splitting and merging happen within small Vorono\"{i} regions in the proposed method rather than on the whole image as in RFHA. Also, the computational complexity of the K-means algorithm is $O(ncdi)$ \cite{ghosh2013comparative}, which is much less compared to the complexity of FCM $O(nc^2di)$. These factors contribute to lower the computational complexity of the proposed method. The computational complexity of the inter Vorono\"{i} region proximal cluster merging is $O(idc^2)$. Thus, the computational complexity of the proposed method is $O(\sum_{k=1}^N {n^r_k n^g_k n^b_k + i_kdc_k^2}) + O(idc^2) + O(ncdi)$.

The computational complexities of each algorithm reported in this article are summarized in table \ref{tab:tab1}. Computational complexity of Mean-Shift algorithm is given as a reference.

\begin{table}[!htb]
\begin{center}
\begin{tabular}{ l c}
\hline
\textbf{Method} & \textbf{Computational Complexity}\\
\hline
Mean-Shift & $O(n^2di)$\\
AFHA & $O(ncdi) + O(nc^2di)$\\
RFHA & $O(n^rn^gn^b) + O(idc^2) + O(nc^2di)$\\
Proposed & $O(\sum_{k=1}^N {n^r_k n^g_k n^b_k + i_kdc_k^2}) + O(idc^2) + O(ncdi)$\\
\hline
\end{tabular}
\end{center}
\caption{Comparison of computational complexity}
\label{tab:tab1}
\end{table}

$n$ = number of data points, $c$ = number of clusters, $d$ = number of dimensions, $i$ = number of iterations, $N$ = number of Vorono\"{i} regions, $n^r, n^g \text{ and } n^b$ = number of valleys in the histograms of R,G and B channels respectively.

\section{Experimental Results and Analysis} \label{sec:exp}
The proposed algorithm was implemented in Matlab and its results were compared with the results of Matlab implementations of AFHA and RFHA algorithms. For the experiments, the latest version of the Berkeley Segmentation Dataset and Benchmark (BSDS500) \cite{BSDS500,MartinFTM01} was selected. Comparison of results of each algorithm for some sample images from BSDS500 data set is given in figures \ref{fig:fig3-1} and \ref{fig:fig3-2}. During the implementation of the algorithm, $\varepsilon$ is set to 71, which is said to be effective in detecting perceptually near clusters as given in \cite{tan2013color}.

\begin{figure}[!htb]
	\centering
		\includegraphics[width=0.75\textwidth]{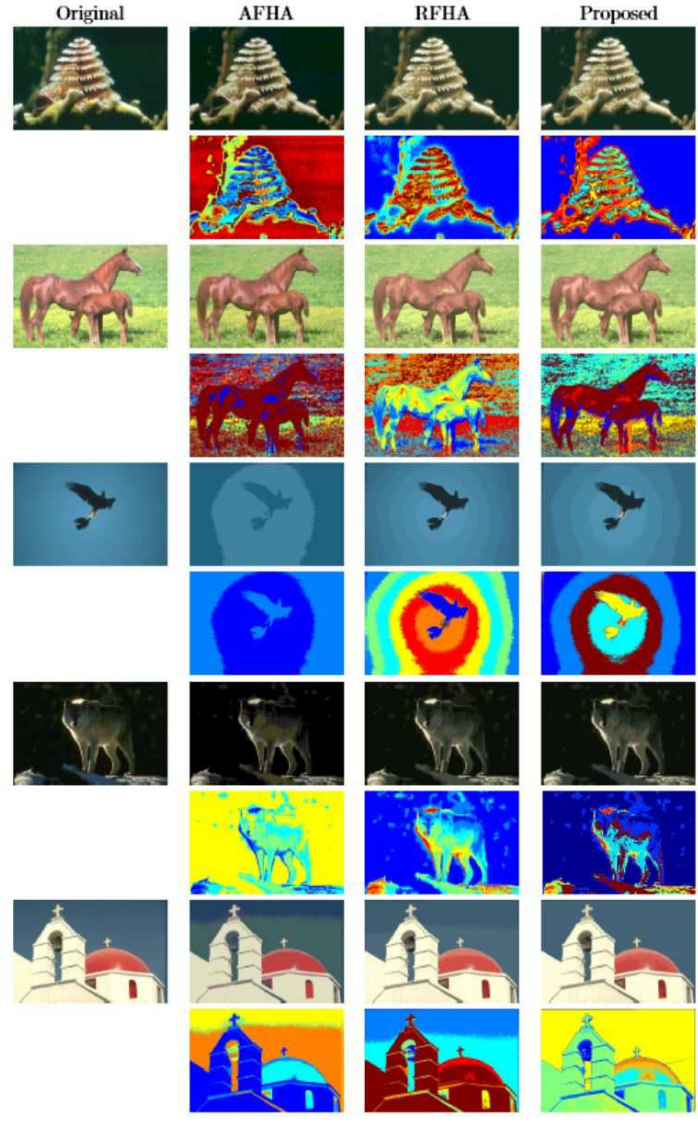}
	\caption{Comparison of results for sample images set 1 from BSD500 data set}
	\label{fig:fig3-1}
\end{figure}

\begin{figure}[!htb]
	\centering
		\includegraphics[width=0.75\textwidth]{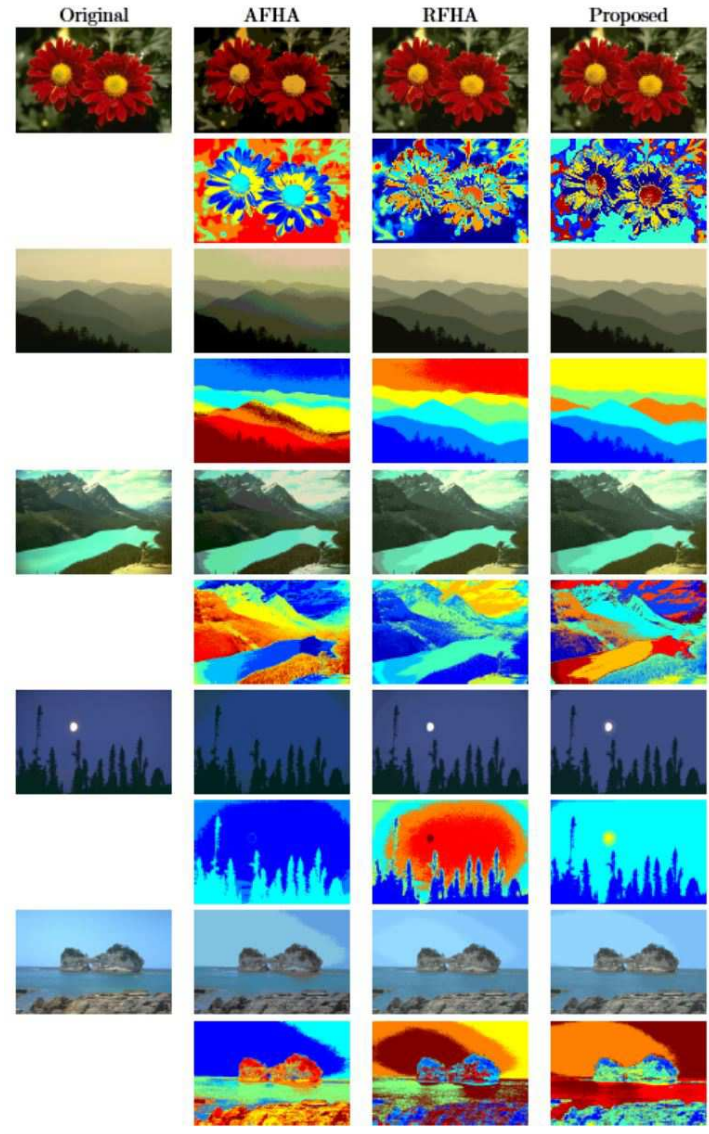}
	\caption{Comparison of results for sample images set 2 from BSD500 data set}
	\label{fig:fig3-2}
\end{figure}

\subsection{Qualitative Analysis of the Segmentation Results}
By observing figures \ref{fig:fig3-1} and \ref{fig:fig3-2}, it is evident that the proposed method outperform the other two methods in terms of the segmentation results for the given sample images. For example, for the Coral image in figure \ref{fig:fig3-1}, both AFHA and RFHA result in non-uniform background (so many isolated pixels in the background) while the proposed method preserves the homogeneity of the segment representing the background. This fact can be clearly seen by zooming in the segments given in false color (row 2 of Coral image results). Also, both AFHA and RFHA results in some misclassified pixels in the background and on the body of the horses in the Horses image. For the Bird image, AFHA fails to give a satisfactory segmentation (centroid colors are far deviated from the original colors) while RFHA produces a segmented image with too many segments in the region of the sky. The proposed method provides a satisfactory segmentation for both these images.

AFHA fails to provide a satisfactory segmentation for the Wolf image as some of the pixels on the Wolf's body are misclassified. The result of RFHA and the proposed method are almost the same except for the fact that RFHA segmented image has more noise in the background. The segmentation result of the proposed method and the RFHA algorithm is very much alike for the River image and Flowers image given in figure \ref{fig:fig3-2}. However, the segmentation of the tree line at the bottom right hand side of the River image segmented by the proposed method is clearer compared to the RFHA segmented image. Furthermore, the proposed method provides the best segmentation result for the Moon image. Both AFHA and RFHA results in over-segmentation of the sky area and AFHA fails to segment the moon in the Moon image. 

Similarly, both AFHA and RFHA results in over-segmentation of the sky region in the Church image and Mountains image and RFHA results in over-segmentation of the sky region in the Sea image. Also, AFHA results in misclassified pixels in the segmented Mountains, Church and Sea images. The proposed method provides a more uniform segmentation results for all three images. 

By observing the results given in figures \ref{fig:fig3-1} and \ref{fig:fig3-2}, it is evident that the proposed method is more robust in segmenting large homogenous regions such as the sky region in all sample images. Segmentation results of AFHA commonly shows misclassified pixels and segments having centroid colors much deviated from the original colors and the segmentation results of RFHA are mostly over-segmented. These facts become evident, when comparing the segmentation results given in false color in figures \ref{fig:fig3-1} and \ref{fig:fig3-2}.

Furthermore, the segmentation results of the proposed method can be further improved by varying the predefined threshold $\varepsilon$ depending on the application. For example, figure \ref{fig:fig4} depicts how the segmentation result of the sample image 2 can be improved by increasing the $\varepsilon$ from 71 to 150. It is important to note that similar change of the predefined threshold ($dc$) in RFHA does not result in such improved segmentation, when the predefined threshold is increased to 150. RFHA results in an under-segmented image for $dc = 150$. 

\begin{figure}[!htbp]
	\centering
		\includegraphics[width=1\textwidth]{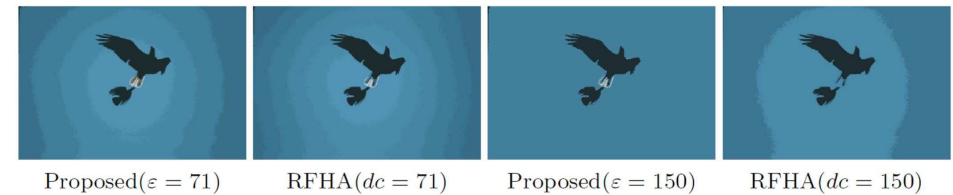}
	\caption{Comparison of results for variations of $\varepsilon$}
	\label{fig:fig4}
\end{figure}

\subsection{Quantitative Analysis of the Segmentation Results} \label{subsec:quant}
There are multiple benchmarks reported in the past literature to evaluate the image segmentation results. Zhang et al. \cite{zhang2008image} broadly categorizes these evaluation methods into supervised evaluation methods, which evaluates segmentation algorithms by comparing the resulting segmented image against a manually segmented reference image or ground truth and unsupervised evaluation methods, which evaluate a segmented image based on how well it matches a broad set of characteristics of segmented images as desired by humans. Supervised evaluation is subjective and time consuming while the unsupervised evaluation is quantitative and objective. In this article, we will be using the unsupervised evaluation methods.

We first started with the \emph{Mean Squared Error (MSE)}, which is one of the most fundamental benchmark used to evaluate cluster quality. MSE can be calculated by using the equation (\ref{eq:eq4}). The number of clusters and cluster quality for the four sample images given in figures \ref{fig:fig3-1} and \ref{fig:fig3-2} are reported in table \ref{tab:tab2}.

\begin{equation}\label{eq:eq4}
	MSE = \frac{1}{N} \sum_{j=1}^M \sum_{i \in S_i} \left\| x_i - c_j \right\|^2
\end{equation}

where $N$ is the total number of pixels in the images, $M$ is the number of clusters produced during the clustering process, $S_j$ is the set of pixels belonging to $j^{th}$ cluster, $c_j$ is the feature vector of the $j^{th}$ cluster centroid and $x_i$ is the feature vector of the $i^{th}$ pixel belonging to $j^{th}$ cluster. Thus, MSE measure the average deviation of the pixels from the cluster centroids. 

\begin{table}[!htbp]
	\centering
		\begin{tabular}{l r r r r r r}
			\hline
			\multirow{2}{*}{Image} & \multicolumn{3}{c}{No. of Clusters} & \multicolumn{3}{c}{MSE (*1.0e+3)}\\			
			 & AFHA & RFHA & Proposed & AFHA & RFHA & Proposed\\
			\hline
			Coral & 9	& 10 & 9	& 1.1727 &	\textbf{0.2886} &	0.2964 \\
			Horses & 23	& 8	& 13	& 1.0874 &	0.6335 &	\textbf{0.3755} \\
			Bird & 2 &	8 &	8 &	1.3135 &	\textbf{0.0479} &	0.0561 \\
			Wolf & 5 &	7 &	7 &	0.8897 &	0.1760 &	\textbf{0.1704} \\
			Church & 6 &	11 &	12 &	1.6880 &	\textbf{0.1567} &	0.1862 \\
			Flowers & 7 &	15 &	18 &	1.9896 &	0.2946 &	\textbf{0.2372} \\
			Mountains & 12 &	7 &	6 &	1.0085 &	\textbf{0.0943} &	0.1156 \\
			River & 16 &	11 &	16 &	1.2827 &	0.4535 &	\textbf{0.2911} \\
			Moon & 3 &	8 &	5 &	1.2717 &	\textbf{0.0413} &	0.0970 \\
			Sea & 8 &	11 &	10 &	1.4449 &	\textbf{0.2283} &	0.2374 \\
			\hline
		\end{tabular}
	\caption{Comparison of no. of clusters and MSE of different algorithms}
	\label{tab:tab2}
\end{table}

The results given in table \ref{tab:tab2} shows that the proposed method results in a very low MSE for all the sample images. For all the sample images, AFHA results in the highest MSE for all sample images. RFHA results in the lowest MSE for six images namely, Coral, Bird, Church, Mountains, Moon and Sea and the proposed method results in the lowest MSE for four images namely, Horses, Wolf, Flowers and River. Overall, for the 200 images that was used in the experiments, the proposed method results in 47\% of the images with the lowest MSE and RFHA results in 53\% of the images with the lowest MSE. Thus, with respect to MSE alone, RFHA seems to perform better than the proposed method.

However, MSE alone has not proven to be a very reliable benchmark for the evaluation of image segmentation results. There is always a trade-off between preserving details and suppressing noise. If there are too many segments in the segmented image, then the difference between pixels belonging to a cluster and the cluster centroid may be lower leading to a smaller MSE. But, since many small clusters are formed and the number of clusters is large, the segmented image may not be a satisfactory one. Therefore, further analysis with regard to the number of clusters and homogeneity of clusters is essential for successful evaluation of the proposed segmentation algorithm.  

Liu and Yang proposed an image segmentation evaluation function $F(I)$ in \cite{liu1994multiresolution}, which penalizes the over-segmentation in segmented images. $F(I)$ can be calculated by using equation (\ref{eq:eq5}).

\begin{equation}\label{eq:eq5} 
	F(I) = \frac{\sqrt{M}}{1000 \times N} \sum_{j=1}^M \frac{e_j^2}{\sqrt{N_j}}  
\end{equation}

where $I$ is the image to be segmented, $M$ is the number of segments in the segmented image, $N_j$ is the number of pixels in the $j^{th}$ segment and $e_j$ is the color error of region $j$. $e_j$ is defined as the sum of the Euclidean distances of the feature vectors between the original image and the segmented image of each pixel region. 

%


The term $\sqrt{M}$ in $F(I)$ penalizes the segmentation which form too many segments. $e_j$ indicates whether or not a region is assigned an appropriate feature (color). If the resulting image is over-segmented, the color error of each segment may be smaller, but since the number of segments is large, the value of $F$ will be large indicating that the segmentation result is not good. On the other hand, if the resulting image is under-segmented, then the number of segments will be reduced, but the color error of each segment will be large leading to a large $F$.

A modified version of $F(I)$ named $F'(I)$ was proposed by Borsotti et al. in \cite{borsotti1998quantitative}. Borsotti et al. propose to modify the global penalization measure $\sqrt{M}$ used in $F(I)$ to make it more robust for noisy images. Borsotti et al. also proposed another evaluation function named $Q(I)$ in \cite{borsotti1998quantitative}, which is said to be more sensitive to small segmentation differences. $Q(I)$ uses a stronger penalization factor to penalize non-homogeneous regions. The equations to calculate $F'(I)$ and $Q(I)$ are given in equations (\ref{eq:eq6}) and (\ref{eq:eq7}).

\begin{equation}\label{eq:eq6} 
	F'(I) = \frac{\sum_{j=1}^M e_j^2 \sqrt{\sum_{a=1}^{MaxArea}[S(a)]^{1+(1/a)}}}{(1000 \times N)\sqrt{N_j}}  
\end{equation}

\begin{equation}\label{eq:eq7} 
	Q(I) = \frac{1}{1000 \times N} \sqrt{M}\sum_{j=1}^M \left[ \frac{e_j^2}{1+\log N_j}+ \left( \frac{S(N_j)}{N_j} \right)^2 \right]
\end{equation}

where $N$ is the total number of pixels in image $I$, $M$ is the number of segments, $N_j$ is the number of pixels in $j^{th}$ segment, $S(a)$ denotes the number of regions in image $I$ that has an area of exactly $a$ and $MaxArea$ denotes the largest region in the segmented image.

Thus, next $F(I)$, $F'(I)$ and $Q(I)$ were measured in order to perform a more comprehensive evaluation of the proposed method. Since, the number of small segments in the segmented images produced by the algorithms reported in this article was very much low, the experimental results for $F(I)$ and $F'(I)$ were the same. As the number of small regions reduces, $\sqrt{\sum_{a=1}^{MaxArea}[S(a)]^{1+(1/a)}}$ approximates to $\sqrt{M}$. Therefore, only the values of $F'(I)$ and $Q(I)$ will be reported in this section. Table \ref{tab:tab3} provides a comparison of the results for $F'(I)$ and $Q(I)$ evaluation functions for different algorithms.

\begin{table}[!htbp]
	\centering
		\begin{tabular}{l r r r r r r}
			\hline
			\multirow{2}{*}{Image} & \multicolumn{3}{c}{$F'(I)$} & \multicolumn{3}{c}{$Q(I)$}\\			
			 & AFHA & RFHA & Proposed & AFHA & RFHA & Proposed \\
			\hline
			Coral & 0.0280 &	0.0101 &	\textbf{0.0093} &	0.3329 &	0.0891 &	\textbf{0.0875} \\
			Horses & 0.0628 &	0.0136 &	\textbf{0.0135} &	0.5503 &	0.1657 &	\textbf{0.1332} \\
			Bird & 0.0067 &	\textbf{0.0016} &	0.0020 &	0.1514 &	\textbf{0.0144} &	0.0200 \\
			Wolf & 0.0140 &	0.0052 &	\textbf{0.0048} &	0.1775 &	0.0454 &	\textbf{0.0443} \\
			Church & 0.0246 &	\textbf{0.0065} &	0.0067 &	0.3926 &	\textbf{0.0532} &	0.0668 \\
			Flowers & 0.0370 &	0.0148 &	\textbf{0.0142} &	0.4788 &	0.1124 &	\textbf{0.1077} \\
			Mountains & 0.0276 &	\textbf{0.0017} &	0.0018 &	0.3730 &	\textbf{0.0229} &	0.0256 \\
			River & 0.0537 &	0.0143 &	\textbf{0.0131} &	0.5149 &	0.1442 &	\textbf{0.1175} \\
			Moon & 0.0086 &	\textbf{0.0011} &	0.0015 &	0.1982 &	\textbf{0.0113} &	0.0246 \\
			Sea & 0.0274 &	0.0072 &	\textbf{0.0071} &	0.3988 &	0.0726 &	\textbf{0.0721} \\
			\hline
		\end{tabular}
	\caption{Comparison of $F'(I)$ and $Q(I)$ evaluation functions of different algorithms}
	\label{tab:tab3}
\end{table}

By observing the results given in table \ref{tab:tab3}, it is evident that the proposed method results in the lowest $F'(I)$ and $Q(I)$ for majority of the sample images. The results of $F'(I)$ and $Q(I)$ are consistent for all sample images. The proposed methods results in the lowest $F'(I)$ and $Q(I)$ for Coral, Horses, Wolf, Flowers, River and Sea images while RFHA results in the lowest $F'(I)$ and $Q(I)$ for Bird, Church, Mountains and Moon images. It is important note that the proposed method had higher MSE compared to RFHA for Coral and Sea images, but the $F'(I)$ and $Q(I)$ values for the same images are lower compared to the RFHA method. This is due to the penalization of over-segmentation available in $F'(I)$ and $Q(I)$. AFHA results in the highest values for $F'(I)$ and $Q(I)$ for all the sample images. Overall, the proposed method results in the lowest $F'(I)$ and $Q(I)$ for 63\% of the 200 images used in experiments. RFHA results in the lowest $F'(I)$ and $Q(I)$ for only 37\% of the 200 images in the data set.

The image segmentation evaluation functions $F$ proposed in \cite{liu1994multiresolution} and $F'$ and $Q$ proposed in \cite{borsotti1998quantitative} fall under the first evaluation criteria given in \cite{haralick1985image}, which measure the intra-cluster uniformity. It is said in \cite{zhang2008image} that $F$, $F'$ and $Q$ are biased towards under-segmentation because they use a weighting factor to penalize against over-segmentation. Also, $F$, $F'$ and $Q$ do not measure the inter-region disparity (the second criteria in \cite{haralick1985image}), which is vital for fair evaluation of segmentation results. 

Thus, next we use the evaluation function $F_{RC}$ proposed by Rosenberger and Chehdi in \cite{rosenberger2000genetic} to cover both evaluation criteria. $F_{RC}$ has two evaluation functions to measure both the intra-region uniformity and inter-region disparity. The first function $\underline{D}(I^j)$ measures the global intra-region uniformity, which quantifies the homogeneity of each region in the segmented image $I^j$ and the second function $\overline{D}(I^j)$ measures the global inter-region disparity between regions.

\begin{equation}\label{eq:eq8} 
	\underline{D}(I^j)=\frac{1}{M}\sum_{j=1}^M \frac{N_j}{N}\underline{D}(R_j)
\end{equation} 

\begin{equation}\label{eq:eq9} 
	\overline{D}(I^j)=\frac{1}{M}\sum_{j=1}^M \frac{N_j}{N}\overline{D}(R_j)
\end{equation} 

$M$ is the number of segments, $N_j$ is the number of pixels in $j^{th}$ segment $R_j$, $N$ is the total number of pixels in image $I_j$. According to \cite{zhang2008image}, the $\underline{D}(R_j)$ in the case of color images is computed as the average squared color error of region $R_j$. The inter-region disparity between two regions is calculated as:

\begin{equation}\label{eq:eq10} 
	D(R_i,R_j)=\frac{\left| E(R_i) - E(R_j) \right|}{NG}
\end{equation} 

where $E(R_i)$ is the average gray-level in the region $R_i$ and $NG$ is the number of gray levels in the image. In the case color images, we will be using the average color difference between the cluster centers to measure $D(R_i,R_j)$.

The intra-region and inter-region metrics were combined in order to find the $F_{RC}$ in \cite{rosenberger2000genetic} as follows.

\begin{equation}\label{eq:eq11} 
	F_{RC}=F(\underline{D}(I^j),\overline{D}(I^j))=\frac{\overline{D}(I^j)-\underline{D}(I^j)}{2}
\end{equation}

\begin{table}[!htbp]
	\centering
		\resizebox{\columnwidth}{!}{%
		\begin{tabular}{l r r r r r r r r r}
			\hline
			\multirow{2}{*}{Image} & \multicolumn{3}{c}{$\underline{D}(I^j)$} & \multicolumn{3}{c}{$\overline{D}(I^j)$} & \multicolumn{3}{c}{$F_{RC}$}\\			
			 & AFHA & RFHA & Proposed & AFHA & RFHA & Proposed & AFHA & RFHA & Proposed \\
			\hline
			Coral & 19.3304 &	\textbf{2.2217} &	3.9567 &	28.0100 &	28.0764 &	\textbf{32.7834} &	4.3398 &	12.9273 &	\textbf{14.4133} \\
			Horses & 2.6996 &	9.5929 &	\textbf{2.3926} &	6.2871 &	\textbf{19.1335} &	12.0146 &	1.7938 &	4.7703 &	\textbf{4.8110} \\
			Bird & 323.2662 & \textbf{0.6789} &	1.3421 &	\textbf{48.0000} & 14.0926 &	16.4910 &	-137.6331 &	6.7069 &	\textbf{7.5744} \\
			Wolf & 66.8444 &	\textbf{2.6871} &	3.6571 &	\textbf{50.9932} &	36.4489 &	39.0588 &	-7.9256 &	16.8809 &	\textbf{17.7008} \\
			Church & 72.7586 &	\textbf{1.4397} &	3.8182 &	\textbf{40.7715} &	23.9536 &	21.5476 &	-15.9936 &	\textbf{11.2569} &	8.8647 \\
			Flowers & 43.7067 &	1.2160 &	\textbf{0.8677} &	\textbf{24.7662} &	12.3263 &	11.9862 &	-9.4702 &	5.55515 &	\textbf{5.55925} \\
			Mountains & 11.0262 &	\textbf{1.8963} &	3.6470 &	19.4786 &	37.0116 &	\textbf{43.2205} &	4.2262 &	17.5576 &	\textbf{19.7868} \\
			River & 5.2477 &	3.5570 &	\textbf{1.2531} &	15.3580 &	\textbf{21.6964} &	18.7902 &	5.0552 &	\textbf{9.0697} &	8.76855 \\
			Moon & 218.7566 &	\textbf{0.8104} &	8.5069 &	35.4797 &	20.7144 &	\textbf{45.2829} &	-91.6385 &	9.9520 &	\textbf{18.3880} \\
			Sea & 37.7216 &	\textbf{1.8485} &	2.5008 &	\textbf{23.8559} &	17.4988 &	19.6872 &	-6.9329 &	7.8252 &	\textbf{8.5932} \\
			\hline
		\end{tabular}
		}
	\caption{Comparison of $\underline{D}(I^j)$, $\overline{D}(I^j)$ and $F_{RC}$ evaluation functions of different algorithms}
	\label{tab:tab4}
\end{table}

A comparison of the values for $\underline{D}(I^j)$, $\overline{D}(I^j)$ and $F_{RC}$ evaluation functions of the proposed method and AFHA and RFHA algorithms is given in table \ref{tab:tab4}. It is said in \cite{zhang2008image} that $F_{RC}$ is more balanced with respect to under-segmentation and over-segmentation with only slight or negligible biases one way or the other. This fact becomes evident by observing the results given in table \ref{tab:tab4}. In the results given in \ref{tab:tab4}, RFHA produces the lowest intra-region uniformity for majority of the sample images and AFHA produces the highest inter-region disparity for the majority of the sample images. However, it is important to note that the proposed method produces the best values for the combined metric $F_{RC}$ for majority of the sample images, which means the proposed method produced a balanced result that preserves both the intra-region uniformity and inter-region disparity at the same time. 

Overall, the proposed method results in the best $F_{RC}$ for 71\% of the 200 images used in experiments. RFHA results in the best $F_{RC}$ for 29\% of the 200 images in the data set while AFHA results in the worst $F_{RC}$ for all 200 images. Thus, we can conclude that the proposed method outperforms both AFHA and RFHA in terms of the image segmentation quality.

\subsection{Analysis of the Execution Time}
Some experiments were conducted in order to compare the execution times of the proposed method with AFHA and RFHA. Table \ref{tab:tab5} shows the execution time of each algorithm run on a Intel Xeon E3-1280 V2 @ 3.60 GHz processor for the 10 sample images reported in the experiments section. Also, the average execution time per image of each algorithm was measured by segmenting the 200 training images in the BSDS500 data set. The results are given in table \ref{tab:tab6}. 

\begin{table}[!htbp]
	\centering
		\begin{tabular}{l c r r r}
			\hline
			\multirow{2}{*}{Image} & \multirow{2}{*}{Image Size} & \multicolumn{3}{c}{Execution Time (seconds)} \\			
			 & & AFHA & RFHA & Proposed\\
			\hline
			Coral & $481 \times 321$ & 38.1181 &	509.9087 &	\textbf{30.0898} \\
			Horses & $481 \times 321$ & 60.8689 &	\textbf{14.7592} &	32.8365 \\
			Bird & $481 \times 321$ & 31.1224 &	47.4568 &	\textbf{3.7746} \\
			Wolf & $481 \times 321$ & 37.9469 &	64.9920 &	\textbf{11.6247} \\
			Church & $481 \times 321$ & 35.5790 &	27.9902 &	\textbf{9.6971} \\
			Flowers & $481 \times 321$ & 45.5225 & 92.5119 & \textbf{21.3567} \\
			Mountains & $481 \times 321$ & 39.5727 & \textbf{6.1321} & 7.1426 \\
			River & $481 \times 321$ & 73.1063 & 40.8520 & \textbf{20.6950} \\
			Moon & $481 \times 321$ & 28.7676 &	58.6831 &	\textbf{9.2689} \\
			Sea & $481 \times 321$ & 46.7545 & 27.3802 & \textbf{21.9857} \\
			\hline
		\end{tabular}
	\caption{Comparison of execution times of different algorithms for sample images}
	\label{tab:tab5}
\end{table}

\begin{table}[!htbp]
\begin{center}
\begin{tabular}{ l c}
\hline
\textbf{Method} & \textbf{Avg. Execution Time (seconds)}\\
\hline
AFHA & 65.3978\\
RFHA & 52.3573\\
Proposed & \textbf{21.8905}\\
\hline
\end{tabular}
\end{center}
\caption{Comparison of average execution time per image for 200 images of BSDS500 data set}
\label{tab:tab6}
\end{table}

By observing the results given in tables \ref{tab:tab5} and \ref{tab:tab6}, we can conclude that the proposed method outperforms AFHA and RFHA in terms of the computational complexity. The average execution time per image of the proposed method is roughly 22 seconds, which is the lowest compared to the other two methods. The average execution time per image of the AFHA method is roughly 65 seconds, which is the highest and the average execution time of the RFHA method is roughly 52 seconds. Thus, the average execution time per image of the proposed method is improved by 57.69\% compared to the RFHA method. Thus, the proposed method is proven to be more suitable for real-time image segmentation applications.

\section{Conclusion} \label{sec:conc}

In this article, a new adaptive unsupervised algorithm based on Vorono\"{i} regions was proposed to solve the image segmentation problem. The proposed algorithm is capable of automatically determining the number of clusters and the cluster centroids in a given set of pixels. The proposed algorithm adaptively divides the image into Vorono\"{i} regions and performs region splitting and merging within Vorono\"{i} regions to find intra-Vorono\"{i} region clusters, which will then be iteratively merged to find the final number of clusters and cluster centroids. In contrast to existing algorithms, K-means clustering algorithm is used to find the final segmented image in the proposed method in place of the FCM algorithm. The Vorono\"{i} region wise clustering in the proposed algorithm leads to significant reduction in the computational complexity of the image segmentation problem. Furthermore, since the number of possible clusters within a single Vorono\"{i} region is usually lower compared to the number of clusters in the whole image, estimating the number of clusters and cluster centroids becomes more efficient and precise. 

The experimental results reported in this article confirm that the proposed method outperforms two other adaptive unsupervised cluster-based image segmentation algorithms, AFHA and RFHA in terms of the image segmentation quality based on three different unsupervised image segmentation evaluation benchmarks. Also, the results of the experiments on average execution time per image prove that the proposed method is much faster compared to the other two algorithms reported in this article, which makes the proposed method more suitable for real-time image segmentation applications.

\bibliographystyle{amsplain}
\bibliography{mybibfile}

\end{document}